\documentclass[10pt,twocolumn]{article}
\usepackage[letterpaper,margin=0.85in,columnsep=0.3in]{geometry}
\usepackage{amsmath,amssymb,amsfonts,amsthm}
\usepackage{times}
\usepackage{microtype}
\usepackage[hidelinks]{hyperref}
\usepackage{enumitem}

\theoremstyle{definition}
\newtheorem{definition}{Definition}
\theoremstyle{plain}
\newtheorem{assumption}{Assumption}
\newtheorem{proposition}{Proposition}
\newtheorem{corollary}{Corollary}

\begin{document}

\twocolumn[{%
  \centering
  {\LARGE\bfseries TERRA: Task-Embedded Reasoning and\\[0.2em] Representation Architecture for\\[0.2em] Cross-Domain Applications\par}
  \vspace{1.1em}
  {\large Shayan Shokri\footnotemark[1]\par}
  \vspace{0.25em}
  {\large Humanpath Labs Inc.\par}
  \vspace{1.5em}
  \begin{minipage}{0.92\textwidth}
    \small
    \begin{center}\textbf{Abstract}\end{center}
    \noindent A single action-conditioned latent predictive architecture can in principle be trained on the structured state of a driving scene, a robot workspace, or a financial order book. The ingredients for doing so within any one of these domains already exist and are individually validated: masked-latent prediction [1, 2], action-conditioned latent world models [3, 8, 9], discrete action tokenization [4, 5, 6], and joint-embedding prediction on voxelized state [14, 15, 17]. What is not established, and what TERRA addresses, is the transfer question: when does a representation or predictor learned in one structured-state domain carry over to a structurally analogous but otherwise unrelated domain, and by how much. We give this question a formal treatment. We model each domain as a controlled Markov process on a graded latent grid, factor any instantiation into thin domain adapters and a shared domain-invariant core, and identify a cross-domain correspondence with an approximate Markov decision process homomorphism [21, 23, 24] whose quality is measured by a lax bisimulation discrepancy [24, 27] and, at the level of whole domains lacking a shared coordinate system, by a Gromov-Wasserstein distance between their action-conditioned transition operators [29]. Under a Lipschitz predictor we derive a predictive transfer bound that separates source-model error from structural mismatch, grows geometrically in the prediction horizon, and is certified from below by the Gromov-Wasserstein distance; we then connect latent error to decision regret through the Lipschitz value property of bisimulation metrics [21, 22, 25]. The resulting Structured-State Transfer Hypothesis is stated as a falsifiable claim with a preregistered experimental program, centered on a transfer test from driving scenes to order books, including the conditions under which the claim is refuted. We present no empirical results: this is a research proposal whose contribution is to convert a widely repeated intuition into testable theory.
  \end{minipage}
  \vspace{1.6em}
}]
\footnotetext[1]{\,\texttt{shayan@humanpath.si}}

\section{Introduction}
The claim that a predictive model of structured state in one domain captures something reusable about decision-making in another is repeated frequently and tested almost never. Action-conditioned latent predictors now work within individual domains. V-JEPA 2-AC [3] plans on a physical robot from rollouts in representation space; joint-embedding prediction yields strong features on point clouds [14] and on LiDAR occupancy grids [17]. Each of these is a within-domain result. The promise that the same architecture, or even the same learned representation, carries from a physical scene to a nonphysical structured system such as a market microstructure or a power grid is asserted widely, yet it is supported by neither a theory of when it should hold nor an experiment establishing whether it does.

TERRA takes the position that this transfer question is the open problem worth solving, and that it is answerable with existing mathematical tools rather than left as an intuition. Our contributions are the following.
\begin{enumerate}[leftmargin=1.2em,itemsep=1pt,topsep=2pt]
  \item A formal model of a structured-state decision domain as a controlled Markov process on a graded latent grid (Section 3), and a factorization of any instantiation into domain adapters and a shared domain-invariant core (Section 4).
  \item A theory of cross-domain correspondence (Section 5) that identifies the correspondence between two domains with an approximate Markov decision process homomorphism, measures its quality by a lax bisimulation discrepancy and a Gromov-Wasserstein distance between transition operators, and derives a predictive transfer bound separating source-model error from structural mismatch.
  \item A reduction from latent prediction error to decision regret using the Lipschitz value property of bisimulation metrics, so that the bound speaks to control quality and not only to representation fidelity.
  \item The Structured-State Transfer Hypothesis, stated as a falsifiable empirical claim, and a preregistered experimental program with explicit refutation conditions (Section 8).
\end{enumerate}
We use the architecture of an existing structured-state predictor, retained under the TERRA name, as a concrete substrate that realizes the interface (Section 6). The substrate is a recombination of validated components and is presented as engineering. The theory of transfer built on top of it is the contribution.

\section{Related Work and Positioning}
We position TERRA against two bodies of work: the predictive-learning lineage it uses as a substrate, and the abstraction-theory lineage on which the transfer analysis rests.

\paragraph{Predictive-learning substrate.} I-JEPA [1] and V-JEPA [2] established masked prediction in representation space following the program of LeCun [11]. V-JEPA 2-AC [3] adds action conditioning and plans on a real robot. DINO-WM [9], PLDM [10], Dreamer [8], and the original world model of Ha and Schmidhuber [7] form the action-conditioned world-modeling line; RT-2 [4], OpenVLA [5], and $\pi_0$ [6] supply discrete action tokenization; BYOL [12] and masked autoencoding [13] supply the anti-collapse and masking machinery; Point-JEPA [14], 3D-JEPA [15], GeoMAE [16], and the LiDAR occupancy world model [17] bring all of this to voxelized state, the substrate domain TERRA itself uses. None of these components is claimed as novel here.

\paragraph{Abstraction theory.} Markov decision process homomorphisms [23] and bisimulation metrics [21, 22] formalize when two states, or two processes, are behaviorally equivalent, and they come with value-preservation guarantees. The lax bisimulation metric [24] extends this to abstract over actions and is exactly the relation realized by a homomorphism. DeepMDP [25] and deep bisimulation for control [26] learn latent spaces in which embedding distance upper-bounds bisimulation distance. Continuous homomorphisms with policy-gradient guarantees [27] and a generalized bisimulation metric between pairs of distinct processes [28] extend the theory to the cross-process setting that transfer requires. We use these tools, together with the Gromov-Wasserstein distance between metric-measure spaces [29], to make the cross-domain correspondence quantitative.

\paragraph{What remains open.} None of the predictive-learning work tests transfer across the boundary from physical to nonphysical domains, and none of the abstraction-theory work has been instantiated for high-dimensional action-conditioned latent predictors over structured grids. TERRA joins the two.

\section{Structured-State Decision Domains}
\begin{definition}[Structured-state domain]
A structured-state decision domain is a tuple $\mathcal{D}=(\mathcal{S},\mathcal{A},T,r)$ where the state space $\mathcal{S}$ is a Polish space admitting a graded grid representation $V\in\mathbb{R}^{d_1\times\cdots\times d_k\times c}$; the action space $\mathcal{A}$ admits a discrete tokenization $\tau:\mathcal{A}\to\mathcal{V}^{*}$ over a finite vocabulary $\mathcal{V}$; $T:\mathcal{S}\times\mathcal{A}\to\mathcal{P}(\mathcal{S})$ is a transition kernel; and $r:\mathcal{S}\times\mathcal{A}\to\mathbb{R}$ is a bounded task signal.
\end{definition}
A driving scene is a voxel grid over physical space; an order book is a grid over price, time, and side; a risk portfolio is a grid over assets and factors. We call $\mathcal{D}$ admissible when all four elements are present and $T$ is learnable from logged trajectories. Admissibility is a property of the domain and not of any model.

\section{The Domain-Invariant Interface}
The interface is what makes transfer measurable rather than rhetorical. We factor any instantiation into two parts. The domain adapters, which are small and domain-specific, comprise an encoder front-end $e_{\mathcal{D}}:\mathcal{S}\to\mathcal{Z}$ mapping the grid into a canonical latent space $(\mathcal{Z},d_{\mathcal{Z}})$, together with the tokenizer $\tau_{\mathcal{D}}$. The shared core, which is large and domain-invariant, comprises a context-encoder trunk $f_\theta$, an exponential-moving-average target encoder $f_{\bar\theta}$, and an action-conditioned latent predictor $g_\phi:\mathcal{Z}\times\mathcal{V}^{*}\to\mathcal{Z}$ that operates only on canonical latents and action tokens.

Because the core is identical across domains by construction, any transfer that is observed when the core trained on one domain is reused on another is attributable to shared structure rather than to architecture reuse. This is the property the theory below exploits.

\section{Cross-Domain Transfer}
We work throughout in the canonical latent space of each domain. Write $T_i(\cdot\mid z,a)\in\mathcal{P}(\mathcal{Z}_i)$ for the latent transition kernel of domain $i$, induced by the target encoder applied to true successor states, $r_i(z,a)$ for the latent reward, and $W$ for the Kantorovich (Wasserstein-1) distance on $\mathcal{P}(\mathcal{Z}_i)$. Latent spaces are complete separable metric spaces and rewards are bounded.

\subsection{Correspondences as approximate homomorphisms}
\begin{definition}[Cross-domain correspondence]
Given admissible domains $\mathcal{D}_1,\mathcal{D}_2$ with canonical latent spaces $\mathcal{Z}_1,\mathcal{Z}_2$, a correspondence is a pair $\Phi=(\phi_{\mathcal{Z}},\phi_{\mathcal{A}})$ with $\phi_{\mathcal{Z}}:\mathcal{Z}_1\to\mathcal{Z}_2$ and $\phi_{\mathcal{A}}:\mathcal{V}_1^{*}\to\mathcal{V}_2^{*}$.
\end{definition}
$\Phi$ is an exact Markov decision process homomorphism [23] when it intertwines the kernels and preserves reward,
\begin{equation}
\phi_{\mathcal{Z}\sharp}\,T_1(\cdot\mid z,a)=T_2\big(\cdot\mid\phi_{\mathcal{Z}}(z),\phi_{\mathcal{A}}(a)\big),\;\; r_1(z,a)=r_2\big(\phi_{\mathcal{Z}}(z),\phi_{\mathcal{A}}(a)\big),
\end{equation}
where $\phi_{\mathcal{Z}\sharp}$ denotes pushforward of measures. Exact homomorphisms preserve optimal value and lift optimal policies [23, 27]; they almost never exist between physically unrelated domains, so we quantify the defect.

\subsection{Measuring correspondence}
\begin{definition}[Lax discrepancy]
The transition and reward defects of $\Phi$ are
\begin{align*}
\varepsilon_T(\Phi)&=\sup_{z,a} W\big(\phi_{\mathcal{Z}\sharp}T_1(\cdot\mid z,a),\,T_2(\cdot\mid\phi_{\mathcal{Z}}(z),\phi_{\mathcal{A}}(a))\big),\\
\varepsilon_r(\Phi)&=\sup_{z,a}\big|r_1(z,a)-r_2(\phi_{\mathcal{Z}}(z),\phi_{\mathcal{A}}(a))\big|,
\end{align*}
and the lax bisimulation discrepancy is $\varepsilon_{\mathrm{lax}}(\Phi)=c_r\varepsilon_r(\Phi)+c_t\varepsilon_T(\Phi)$.
\end{definition}
Both arguments of $W$ are measures on $\mathcal{Z}_2$, so the definition is well typed. This is the cross-domain form of the lax bisimulation metric [24], which abstracts over actions through $\phi_{\mathcal{A}}$.

\subsection{Assumptions}
\begin{assumption}[Lipschitz state map]\label{a:phi}
$\phi_{\mathcal{Z}}$ is $L_\phi$-Lipschitz, so pushforward by $\phi_{\mathcal{Z}}$ is $L_\phi$-Lipschitz in $W$: $W(\phi_{\mathcal{Z}\sharp}\mu,\phi_{\mathcal{Z}\sharp}\nu)\le L_\phi\,W(\mu,\nu)$.
\end{assumption}
\begin{assumption}[Source fidelity]\label{a:fit}
The core trained on $\mathcal{D}_1$ realizes a latent kernel $\hat{g}(\cdot\mid z,a)$ with one-step error $\sup_{z,a} W(\hat{g}(\cdot\mid z,a),T_1(\cdot\mid z,a))\le\delta$.
\end{assumption}
\begin{assumption}[Lipschitz core]\label{a:lip}
The transported one-step predictor on $\mathcal{Z}_2$ is $L$-Lipschitz in its latent argument in $W$, uniformly over actions. Lipschitz latent dynamics are the standing assumption of DeepMDP [25] and are induced by deep bisimulation regularization [26].
\end{assumption}

\subsection{One-step transfer error}
For predictor reuse we transport the source core to $\mathcal{D}_2$ by $\hat{g}_2(\cdot\mid w,b)=\phi_{\mathcal{Z}\sharp}\hat{g}(\cdot\mid\phi_{\mathcal{Z}}^{-1}(w),\phi_{\mathcal{A}}^{-1}(b))$ where defined.
\begin{proposition}[One-step decomposition]\label{p:one}
Under Assumptions~\ref{a:phi} and \ref{a:fit}, the transported predictor satisfies
\begin{equation}
\sup_{w,b} W\big(\hat{g}_2(\cdot\mid w,b),\,T_2(\cdot\mid w,b)\big)\;\le\; L_\phi\,\delta+\varepsilon_T(\Phi)\;=:\;\varepsilon_1,
\end{equation}
the sum of a source-model term $L_\phi\delta$ and a structural-mismatch term $\varepsilon_T(\Phi)$.
\end{proposition}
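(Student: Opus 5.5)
Fix an arbitrary pair $(w,b)$ at which the transported predictor $\hat g_2(\cdot\mid w,b)$ is defined. The natural choice is to write $w=\phi_{\mathcal{Z}}(z)$ and $b=\phi_{\mathcal{A}}(a)$ for some preimage $(z,a)$, so that by definition $\hat g_2(\cdot\mid w,b)=\phi_{\mathcal{Z}\sharp}\hat g(\cdot\mid z,a)$. If $\phi_{\mathcal{Z}}$ or $\phi_{\mathcal{A}}$ is not injective, the notation $\phi^{-1}$ already presupposes a selected preimage. I will read the construction as using any fixed selection, and the argument below works for every choice. The supremum in the statement is over exactly these $(w,b)$. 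The plan is to insert the intermediate measure $\phi_{\mathcal{Z}\sharp}T_1(\cdot\mid z,a)$ on $\mathcal{Z}_2$ and apply the triangle inequality for $W$ on $\mathcal{P}(\mathcal{Z}_2)$:
\begin{equation*}
W\big(\hat g_2(\cdot\mid w,b),T_2(\cdot\mid w,b)\big)\le W\big(\phi_{\mathcal{Z}\sharp}\hat g(\cdot\mid z,a),\phi_{\mathcal{Z}\sharp}T_1(\cdot\mid z,a)\big)+W\big(\phi_{\mathcal{Z}\sharp}T_1(\cdot\mid z,a),T_2(\cdot\mid \phi_{\mathcal{Z}}(z),\phi_{\mathcal{A}}(a))\big).
\end{equation*}

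The two terms are then bounded separately. For the first term, Assumption~\ref{a:phi} (pushforward is $L_\phi$-Lipschitz in $W$) bounds it by $L_\phi\,W(\hat g(\cdot\mid z,a),T_1(\cdot\mid z,a))$. Assumption~\ref{a:fit} then bounds this by $L_\phi\delta$. The second term is exactly the quantity inside the supremum that defines $\varepsilon_T(\Phi)$, evaluated at $(z,a)$, so it is at most $\varepsilon_T(\Phi)$. Both bounds hold uniformly in $(z,a)$. Taking the supremum over $(w,b)$ in the domain of $\hat g_2$ therefore gives $L_\phi\delta+\varepsilon_T(\Phi)=\varepsilon_1$.

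There is no serious analytic obstacle; the main care point is well-definedness. I need $W$ to be a genuine metric, and in particular to satisfy the triangle inequality, on the set of measures involved. Since $\mathcal{Z}_2$ is complete and separable, this holds on the first-moment space $\mathcal{P}_1(\mathcal{Z}_2)$. I will either assume the kernels have finite first moments, or note that the inequality holds in the extended sense with $W\in[0,\infty]$. In addition, Assumption~\ref{a:phi} already guarantees that pushforwards of first-moment measures have finite first moment, because $\phi_{\mathcal{Z}}$ is Lipschitz. Finally, the Lipschitz-core assumption (Assumption~\ref{a:lip}) is not needed here; it enters only in the multi-step bound.
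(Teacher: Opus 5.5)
Your proposal is correct and matches the paper's proof: fix $(w,b)$, pass to a preimage $(z,a)$, insert $\phi_{\mathcal{Z}\sharp}T_1(\cdot\mid z,a)$, and bound the two triangle-inequality terms by $L_\phi\delta$ and $\varepsilon_T(\Phi)$. Your remarks on preimage selection and finite first moments are reasonable extra care that the paper leaves implicit.
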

\begin{proof}
Fix $(w,b)$ and set $(z,a)=(\phi_{\mathcal{Z}}^{-1}(w),\phi_{\mathcal{A}}^{-1}(b))$. By the triangle inequality for $W$,
\begin{align*}
&W(\hat{g}_2(\cdot\mid w,b),T_2(\cdot\mid w,b))\\
&\le W(\phi_{\mathcal{Z}\sharp}\hat{g}(\cdot\mid z,a),\phi_{\mathcal{Z}\sharp}T_1(\cdot\mid z,a))\\
&\quad + W(\phi_{\mathcal{Z}\sharp}T_1(\cdot\mid z,a),T_2(\cdot\mid w,b)).
\end{align*}
The first term is $\le L_\phi\,W(\hat{g}(\cdot\mid z,a),T_1(\cdot\mid z,a))\le L_\phi\delta$ by Assumptions~\ref{a:phi} and \ref{a:fit}; the second is $\le\varepsilon_T(\Phi)$ by definition.
\end{proof}
This separates two error sources a coarser analysis would conflate: how well the core fits the source ($\delta$), and how far $\Phi$ is from a homomorphism ($\varepsilon_T$).

\subsection{Horizon transfer bound}
\begin{proposition}[Horizon bound]\label{p:bound}
Under Assumptions~\ref{a:phi}--\ref{a:lip}, rolling the transported core out for $H$ steps on $\mathcal{D}_2$ under a fixed action sequence from the true initial latent incurs error
\begin{equation}
\Delta_H \;\le\; \varepsilon_1\sum_{h=0}^{H-1}L^{h}\;=\;\varepsilon_1\,\frac{L^{H}-1}{L-1}\quad(L\neq 1),
\end{equation}
where $\Delta_H=W(\hat\rho_H,\rho_H)$ is the gap between the transported and true $H$-step latent distributions and $\varepsilon_1=L_\phi\delta+\varepsilon_T(\Phi)$.
\end{proposition}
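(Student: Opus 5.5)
The proof is an induction on the horizon. The per-step error $\varepsilon_1$ comes from Proposition~\ref{p:one}, and Assumption~\ref{a:lip} controls how earlier error propagates through later steps. First we fix notation. Fix an action sequence $b_0,\dots,b_{H-1}$ in $\mathcal{D}_2$ and the true initial latent $w_0$, and set $\rho_0=\hat\rho_0=\delta_{w_0}$. Define the true and predicted marginals recursively by
\begin{equation*}
\rho_{h+1}=\int T_2(\cdot\mid w,b_h)\,d\rho_h(w),\qquad \hat\rho_{h+1}=\int \hat g_2(\cdot\mid w,b_h)\,d\hat\rho_h(w).
\end{equation*}
We write $\hat G_h\mu=\int \hat g_2(\cdot\mid w,b_h)\,d\mu(w)$ for the predicted Markov operator and $T_h\mu$ for the true one. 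The goal is the recursion $\Delta_{h+1}\le L\,\Delta_h+\varepsilon_1$ with $\Delta_0=0$. Unrolling it gives $\Delta_H\le\varepsilon_1\sum_{h=0}^{H-1}L^h$, and summing the geometric series gives the closed form for $L\neq1$. For $L=1$ the same sum equals $H\varepsilon_1$.

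The recursion comes from the triangle inequality, inserting the intermediate measure $\hat G_h\rho_h$:
\begin{equation*}
W(\hat\rho_{h+1},\rho_{h+1})\le W(\hat G_h\hat\rho_h,\hat G_h\rho_h)+W(\hat G_h\rho_h,T_h\rho_h).
\end{equation*}
For the second term, take the pair of kernels $\hat g_2(\cdot\mid w,b_h)$ and $T_2(\cdot\mid w,b_h)$ and mix them against the same $\rho_h$. Joint convexity of $W$ under mixtures (gluing the optimal couplings pointwise in $w$) bounds the term by $\sup_w W(\hat g_2(\cdot\mid w,b_h),T_2(\cdot\mid w,b_h))$. Proposition~\ref{p:one} bounds that supremum by $\varepsilon_1$. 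For the first term, we read Assumption~\ref{a:lip} as saying that the map $w\mapsto\hat g_2(\cdot\mid w,b)$ is $L$-Lipschitz from $(\mathcal{Z}_2,d_{\mathcal{Z}})$ into $(\mathcal{P}(\mathcal{Z}_2),W)$. Take an optimal coupling $\pi$ of $(\hat\rho_h,\rho_h)$ and glue, for each pair $(w,w')$, an optimal coupling of the two kernels. This yields a coupling of $\hat G_h\hat\rho_h$ and $\hat G_h\rho_h$ with cost at most $\int L\,d(w,w')\,d\pi=L\,\Delta_h$. The same conclusion also follows from Kantorovich--Rubinstein duality: for any $1$-Lipschitz $f$, the function $w\mapsto\int f\,d\hat g_2(\cdot\mid w,b_h)$ is $L$-Lipschitz.

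The main obstacle is measure-theoretic rather than analytic. Gluing couplings requires a measurable selection of optimal couplings $(w,w')\mapsto\pi_{w,w'}$. On Polish spaces this is standard (Villani, Cor.~5.22), and we will cite it rather than reprove it; the duality route avoids the selection entirely, so we will present that one. A secondary point is that $\hat g_2$ is defined only on the image of $\Phi$, through $\phi_{\mathcal{Z}}^{-1}$. We will state that the bound is understood with $\hat g_2$ defined $\rho_h$- and $\hat\rho_h$-almost everywhere along the rollout, which is implicit in the statement's ``where defined''. If $\phi_{\mathcal{Z}}$ is not injective, we fix any measurable right inverse; Proposition~\ref{p:one} holds for every choice, so nothing changes.
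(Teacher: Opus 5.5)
Your proposal is correct and is essentially the paper's proof. It uses the same triangle inequality through the intermediate measure $\hat g_{2\sharp}\rho_h$, bounds the propagated term by $L\Delta_h$ via Assumption~\ref{a:lip} and the one-step term by $\varepsilon_1$ via Proposition~\ref{p:one}, and unrolls from $\Delta_0=0$. Your additions (the Kantorovich--Rubinstein lift of the pointwise Lipschitz property to measures, the mixture-convexity step, and the caveat that $\hat g_2$ must be defined along the rollout's support) fill in details the paper leaves implicit without changing the route.
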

\begin{proof}
Let $\Delta_h=W(\hat\rho_h,\rho_h)$ with $\Delta_0=0$. Coupling the two rollouts stepwise and using the triangle inequality,
\begin{align*}
\Delta_{h+1}&\le \underbrace{W(\hat{g}_{2\sharp}\hat\rho_h,\hat{g}_{2\sharp}\rho_h)}_{\le\,L\Delta_h\ (\text{Asm.~\ref{a:lip}})}+\underbrace{W(\hat{g}_{2\sharp}\rho_h,T_{2\sharp}\rho_h)}_{\le\,\varepsilon_1\ (\text{Prop.~\ref{p:one}})}\le L\Delta_h+\varepsilon_1 .
\end{align*}
Unrolling from $\Delta_0=0$ gives the geometric sum.
\end{proof}
For a contractive or marginally stable core ($L\le 1$) the error stays of order $\varepsilon_1 H$ or $\varepsilon_1/(1-L)$, so a good correspondence transfers over long horizons; for an expansive core ($L>1$) it grows geometrically, so transfer is useful only at short horizons. The empirical Lipschitz constant of the core thus sets a measurable horizon-decay rate, which Section 8 tests directly.

\subsection{The achievable floor: a Gromov-Wasserstein lower bound}
Define the within-domain transition pseudometric $D_i((z,a),(z',a'))=W(T_i(\cdot\mid z,a),T_i(\cdot\mid z',a'))+|r_i(z,a)-r_i(z',a')|$ and let $\mu_i$ be the occupancy measure over state-action pairs of domain $i$. The Gromov-Wasserstein discrepancy is
\begin{equation}
\mathrm{GW}(\mathcal{D}_1,\mathcal{D}_2)=\!\!\inf_{\pi\in\Pi(\mu_1,\mu_2)}\!\!\iint\!\big|D_1(x,x')-D_2(y,y')\big|\,\mathrm{d}\pi(x,y)\,\mathrm{d}\pi(x',y').
\end{equation}
\begin{proposition}[Structural floor]\label{p:gw}
If $\phi_{\mathcal{Z}}$ is an isometry onto its image (the canonical-latent normalization) and $c_r=c_t=1$, then every admissible $\Phi$ satisfies $\mathrm{GW}(\mathcal{D}_1,\mathcal{D}_2)\le 2\,\varepsilon_{\mathrm{lax}}(\Phi)$, hence
\begin{equation}
\mathrm{GW}(\mathcal{D}_1,\mathcal{D}_2)\le 2\inf_{\Phi}\varepsilon_{\mathrm{lax}}(\Phi).
\end{equation}
A general $L_\phi$-bi-Lipschitz $\phi_{\mathcal{Z}}$ multiplies the bound by $L_\phi$.
\end{proposition}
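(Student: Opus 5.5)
The plan is to bound $\mathrm{GW}$ by exhibiting one specific coupling built from $\Phi$, and then to control the integrand pointwise by the lax defects. Given an admissible $\Phi$, I would take $\pi$ to be the law of $(x,\Phi(x))$ with $x\sim\mu_1$, that is, $\pi=(\mathrm{id},\Phi)_\sharp\mu_1$. This requires $\Phi_\sharp\mu_1=\mu_2$. I would read "admissible" as including this occupancy-matching condition. Failing that, I would compose with an optimal coupling of $\Phi_\sharp\mu_1$ and $\mu_2$, and either absorb the extra term into the statement's constant or simply record the condition as part of admissibility. With $\pi$ supported on the graph of $\Phi$, $\mathrm{GW}(\mathcal{D}_1,\mathcal{D}_2)$ is bounded by
\[
\iint\big|D_1(x,x')-D_2(\Phi(x),\Phi(x'))\big|\,\mathrm{d}\mu_1(x)\,\mathrm{d}\mu_1(x').
\]
It therefore suffices to show the pointwise distortion bound $|D_1(x,x')-D_2(\Phi x,\Phi x')|\le 2\varepsilon_{\mathrm{lax}}(\Phi)$ for all $x=(z,a)$ and $x'=(z',a')$. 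The total mass of $\mu_1\otimes\mu_1$ is one, so this gives the claim, and taking the infimum over $\Phi$ gives the displayed corollary.

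For the distortion bound I would split $D$ into its transition and reward parts. For rewards, the reverse triangle inequality gives
\[
\big||r_1(x)-r_1(x')|-|r_2(\Phi x)-r_2(\Phi x')|\big|\le|r_1(x)-r_2(\Phi x)|+|r_1(x')-r_2(\Phi x')|\le 2\varepsilon_r.
\]
For transitions I would first use that $\phi_{\mathcal{Z}}$ is an isometry onto its image. Pushforward by an isometry then preserves $W$, so $W(T_1(\cdot\mid x),T_1(\cdot\mid x'))=W(\phi_{\mathcal{Z}\sharp}T_1(\cdot\mid x),\phi_{\mathcal{Z}\sharp}T_1(\cdot\mid x'))$. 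The $\le$ direction is Assumption~\ref{a:phi} with $L_\phi=1$. For $\ge$, couplings on the image pull back through the inverse isometry, and $W$ on $\mathcal{Z}_2$ restricted to measures on a closed image equals $W$ computed there. Once we are working in $\mathcal{Z}_2$, the reverse triangle inequality for $W$ gives
\[
\big|W(\phi_{\mathcal{Z}\sharp}T_1(x),\phi_{\mathcal{Z}\sharp}T_1(x'))-W(T_2(\Phi x),T_2(\Phi x'))\big|\le W(\phi_{\mathcal{Z}\sharp}T_1(x),T_2(\Phi x))+W(\phi_{\mathcal{Z}\sharp}T_1(x'),T_2(\Phi x')),
\]
and the right-hand side is at most $2\varepsilon_T$. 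Summing the reward and transition parts with $c_r=c_t=1$ gives $2(\varepsilon_r+\varepsilon_T)=2\varepsilon_{\mathrm{lax}}(\Phi)$.

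For the bi-Lipschitz remark I would use Assumption~\ref{a:phi} in place of the isometry step, with lower constant $1/L_\phi$. Pushforward then distorts $W$ by at most a factor $L_\phi$, and the same chain yields the bound scaled by $L_\phi$. This is a loose step: an additive term involving the diameter may also appear, so I would state it as the paper does, as a multiplicative degradation.

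The main obstacle is the coupling step rather than the distortion estimate. The map $\Phi$ does not in general push $\mu_1$ onto $\mu_2$, and GW only accepts couplings in $\Pi(\mu_1,\mu_2)$. I would first try to justify $\Phi_\sharp\mu_1=\mu_2$ from admissibility together with the canonical-latent normalization. If that fails, I would fall back on the standard fix of passing through $\Phi_\sharp\mu_1$, and note that the remaining term vanishes whenever $\Phi$ matches occupancies.
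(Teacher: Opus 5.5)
For the isometric case your argument is the paper's own: the graph coupling $(\mathrm{id},\Phi)_\sharp\mu_1$, a pointwise bound on $|D_1(x,x')-D_2(\Phi x,\Phi x')|$ from reverse triangle inequalities on the reward and transition parts, and the isometry to carry $W(T_1(\cdot\mid x),T_1(\cdot\mid x'))$ into $\mathcal{Z}_2$. You are more careful than the sketch in two places:
\begin{itemize}
\item You check that pushforward by an isometric embedding preserves $W$ in both directions.
\item You notice that the graph coupling lies in $\Pi(\mu_1,\mu_2)$ only when $\Phi_\sharp\mu_1=\mu_2$. The paper asserts this without justification, and recording it as part of admissibility is exactly the hypothesis the sketch silently uses.
\end{itemize}
Your other option, absorbing the defect into the constant, does not work. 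Gluing through an optimal coupling of $\Phi_\sharp\mu_1$ and $\mu_2$ adds twice their Kantorovich distance under cost $D_2$. That term is independent of $\varepsilon_{\mathrm{lax}}(\Phi)$, so it cannot be folded into the factor $2$.

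The genuine gap is the bi-Lipschitz sentence, where you say the same chain yields the bound scaled by $L_\phi$. It does not. With $L_\phi^{-1}W(\mu,\nu)\le W(\phi_{\mathcal{Z}\sharp}\mu,\phi_{\mathcal{Z}\sharp}\nu)\le L_\phi W(\mu,\nu)$, the chain gives
\[
\bigl|D_1(x,x')-D_2(\Phi x,\Phi x')\bigr|\le 2\varepsilon_{\mathrm{lax}}(\Phi)+(L_\phi-1)\,W\bigl(T_1(\cdot\mid x),T_1(\cdot\mid x')\bigr).
\]
This is an additive distortion term, and no multiplicative version can hold. Here is a counterexample:
\begin{itemize}
\item Take one action, zero reward, and $T_1(\cdot\mid z)=\delta_z$ on $\mathcal{Z}_1=\{0,1,3\}\subset\mathbb{R}$ with uniform occupancy.
\item Let $\phi_{\mathcal{Z}}$ be the order-preserving bijection onto $\{0,1,2\}$, which is $2$-bi-Lipschitz.
\item Let $\mathcal{D}_2$ be the exact pushforward.
\end{itemize}
Then $\varepsilon_{\mathrm{lax}}(\Phi)=0$ and $\Phi_\sharp\mu_1=\mu_2$. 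But $D_1$ and $D_2$ are the metrics of $\{0,1,3\}$ and $\{0,1,2\}$. A zero-distortion coupling of finite, fully supported metric spaces must be an isometric bijection, and none exists here. Hence $\mathrm{GW}>0=2L_\phi\varepsilon_{\mathrm{lax}}(\Phi)$.

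Your suspicion of a diameter-type term was correct. The valid statement, under $\Phi_\sharp\mu_1=\mu_2$, is
\[
\mathrm{GW}\le 2\varepsilon_{\mathrm{lax}}(\Phi)+(L_\phi-1)\iint W\bigl(T_1(\cdot\mid x),T_1(\cdot\mid x')\bigr)\,\mathrm{d}\mu_1(x)\,\mathrm{d}\mu_1(x').
\]
You should state this rather than defer to the multiplicative wording, which is false. The paper's sketch only treats the isometric case and gives no argument for this remark either.
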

\begin{proof}[Proof sketch]
The graph $\pi_\Phi$ pairing $x$ with $\Phi(x)$ is an admissible coupling of $\mu_1,\mu_2$. For pairs $x,x'$ with images $y=\Phi(x),y'=\Phi(x')$, the triangle inequality for $W$ together with the isometry of $\phi_{\mathcal{Z}}$ bounds the integrand $|D_1(x,x')-D_2(y,y')|$ by the transition and reward defects evaluated at $x$ and $x'$, each at most $\varepsilon_{\mathrm{lax}}(\Phi)$. Evaluating $\mathrm{GW}$ on $\pi_\Phi$ and taking the infimum over $\Phi$ gives the claim.
\end{proof}
Because $\mathrm{GW}$ is built from intrinsic, coordinate-free quantities, it certifies how much correspondence is possible at all: no correspondence can drive $\varepsilon_{\mathrm{lax}}$ below $\tfrac12\mathrm{GW}$.

\subsection{From latent error to decision regret}
\begin{corollary}[Value transfer]\label{c:value}
Let $\tilde{d}_2$ be the bisimulation metric of $\mathcal{D}_2$ at discount $\gamma$, so that $|V^*_2(w)-V^*_2(w')|\le\tfrac{1}{1-\gamma}\,\tilde{d}_2(w,w')$ [21, 22], and suppose the learned latent satisfies $\tilde{d}_2(w,w')\le L_e\,d_{\mathcal{Z}}(w,w')$ [25, 26]. Then acting on the transported model over horizon $H$ incurs value suboptimality at most $\tfrac{L_e}{1-\gamma}\,\Delta_H$, with $\Delta_H$ as in Proposition~\ref{p:bound}.
\end{corollary}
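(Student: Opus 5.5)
The plan is to compose the Lipschitz value property of the bisimulation metric with the horizon bound of Proposition~\ref{p:bound}, using a coupling argument to pass from a pointwise value gap to a distributional one. Write $\hat\rho_H$ and $\rho_H$ for the transported and true $H$-step latent distributions under the chosen action sequence, exactly as in Proposition~\ref{p:bound}. The quantity to control is the difference between the value the agent expects by acting on the model and the value it actually obtains. I will measure it through the terminal latent distribution, as $\big|\mathbb{E}_{\hat\rho_H}V^*_2-\mathbb{E}_{\rho_H}V^*_2\big|$. This interpretation of ``value suboptimality'' has to be fixed first, because the corollary is only as strong as this reading.

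\textbf{Step 1: $V^*_2$ is Lipschitz in the latent metric.} Chain the two hypotheses:
\[
|V^*_2(w)-V^*_2(w')|\le\tfrac{1}{1-\gamma}\tilde d_2(w,w')\le\tfrac{L_e}{1-\gamma}d_{\mathcal Z}(w,w').
\]
So $V^*_2$ is $\tfrac{L_e}{1-\gamma}$-Lipschitz with respect to $d_{\mathcal Z}$ on $\mathcal Z_2$. Since rewards are bounded, $V^*_2$ is also bounded and therefore integrable against any probability measure.

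\textbf{Step 2: Kantorovich--Rubinstein duality.} Because $\mathcal Z_2$ is complete and separable, duality applies and gives, for any $\kappa$-Lipschitz $f$,
\[
\big|\mathbb{E}_{\hat\rho_H}f-\mathbb{E}_{\rho_H}f\big|\le\kappa\,W(\hat\rho_H,\rho_H).
\]
Equivalently, integrate the pointwise bound of Step 1 against an optimal coupling of $\hat\rho_H$ and $\rho_H$. Taking $f=V^*_2$ and $\kappa=\tfrac{L_e}{1-\gamma}$ yields $\tfrac{L_e}{1-\gamma}\Delta_H$. Substituting Proposition~\ref{p:bound} then gives the explicit form
\[
\tfrac{L_e}{1-\gamma}\big(L_\phi\delta+\varepsilon_T(\Phi)\big)\tfrac{L^H-1}{L-1}.
\]

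\textbf{Step 3, the main obstacle: turning the value gap into regret.} If suboptimality means the regret of the policy that is optimal in the model, the bound above is not quite enough. The standard simulation-lemma argument compares the model-optimal and truly optimal action sequences, each evaluated once in the model and once in the true system. This pays the distributional gap twice, giving a factor $2$. It also pays the reward defect through the correspondence, which would bring in $\varepsilon_r$ and not only $\varepsilon_T$. I would first try to stay with the terminal-value reading, where Steps 1 and 2 close the argument cleanly. I would then remark that the regret reading holds with constant $\tfrac{2L_e}{1-\gamma}$, plus an $H\varepsilon_r$ term if per-step rewards are also taken from the model.
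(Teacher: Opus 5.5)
Your proposal is correct and follows the paper's route. The paper's one-line proof composes the two Lipschitz estimates with Proposition~\ref{p:bound}, taking the latent gap $\Delta_H$ to a bisimulation gap of at most $L_e\Delta_H$ and then to a value gap of at most $\tfrac{L_e}{1-\gamma}\Delta_H$. Your Steps 1--2 do the same, but they make explicit the coupling (Kantorovich--Rubinstein) step from the pointwise Lipschitz bound to the distributional gap $W(\hat\rho_H,\rho_H)$, which the paper leaves implicit. Your Step 3 caveat is also apt: the paper's argument establishes only the value-gap reading you fix, and a genuine regret bound for the model-optimal action sequence would need the extra factor of $2$.
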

\begin{proof}
Compose the two Lipschitz estimates with Proposition~\ref{p:bound}: a latent discrepancy $\Delta_H$ maps to a bisimulation discrepancy at most $L_e\Delta_H$ and thence to a value gap at most $\tfrac{L_e}{1-\gamma}\Delta_H$.
\end{proof}
Correspondence quality thus controls not merely representation similarity but the regret of acting on a transported model.

\subsection{The Structured-State Transfer Hypothesis}
Propositions~\ref{p:one}--\ref{p:gw} and Corollary~\ref{c:value} turn the central claim into a quantitative, falsifiable statement.
\begin{quote}
\noindent\textbf{Structured-State Transfer Hypothesis (SSTH).} For admissible domains $\mathcal{D}_1,\mathcal{D}_2$, the transfer gain from reusing the shared core trained on $\mathcal{D}_1$ when learning $\mathcal{D}_2$ is a decreasing function of $\inf_{\Phi}\varepsilon_{\mathrm{lax}}(\Phi)$, which is bounded below by $\tfrac12\mathrm{GW}(\mathcal{D}_1,\mathcal{D}_2)$. When this floor is large, core reuse confers no advantage over training from scratch and may degrade performance.
\end{quote}
SSTH is falsifiable along two axes: an ordering, that smaller discrepancy yields larger transfer, and a null floor, that maximal discrepancy yields zero transfer.

\subsection{Identifiability of the correspondence}
For the bound to be usable the correspondence must be recoverable from data. Recovery requires that action sequences sufficiently excite the dynamics, so that $T_1$ is observable on the support of interest, and that the canonical encoder is injective up to the symmetries of the domain, so that $\phi_{\mathcal{Z}}$ is determined rather than free. Under these conditions estimating $\Phi$ is an alignment problem and the Gromov-Wasserstein coupling of Proposition~\ref{p:gw} provides an initializer; without them the discrepancy is only an upper estimate and the transfer prediction weakens accordingly.

\section{TERRA: The Substrate}
We retain the architecture only as the concrete core. The context encoder $f_\theta$ and the target encoder $f_{\bar\theta}$, the latter an exponential moving average of the former for anti-collapse in the manner of BYOL and I-JEPA [12, 1], produce patch latents. The tokenizer applies learned vector quantization. The predictor $g_\phi$ cross-attends over context latents, action tokens, and target queries, and outputs future or masked latents, never observations. Training combines masked-latent prediction with action-conditioned trajectory forecasting,
\begin{equation}
\mathcal{L}=\mathcal{L}_{\mathrm{mask}}+\lambda\,\mathcal{L}_{\mathrm{traj}},
\end{equation}
\begin{equation}
\mathcal{L}_{\mathrm{traj}}=\frac{1}{H}\sum_{h=1}^{H}\big\lVert\hat{s}_{V_{t+h}}-f_{\bar\theta}(V_{t+h})\big\rVert_2^2 .
\end{equation}
This objective is a tractable surrogate for predicting future latents from present latents and an action plan, in the spirit of predictive coding [30]; the masked term supplies the static structural component and the trajectory term the dynamic one. The substrate is unchanged from prior latent world models and matters only insofar as it realizes the interface of Section 4 and meets Assumptions~\ref{a:fit} and \ref{a:lip} closely enough for the bound to bite.

\section{A Taxonomy of Transfer}
The experimental program must separate three regimes, each reusing a different part of the homomorphism.
\begin{enumerate}[leftmargin=1.2em,itemsep=1pt,topsep=2pt]
  \item Representation reuse: freeze the core trained on $\mathcal{D}_1$ and train only the adapters of $\mathcal{D}_2$. Tests whether the state map $\phi_{\mathcal{Z}}$ alone transfers.
  \item Few-shot core adaptation: fine-tune the core on a small fraction of $\mathcal{D}_2$ data. Tests sample efficiency, the practically important case.
  \item Predictor reuse: reuse $g_\phi$ specifically, isolating whether learned dynamics, and not merely static features, carry across.
\end{enumerate}
The bound predicts that the strength of all three decreases with discrepancy, but at different rates, since each is sensitive to a different term in $\varepsilon_{\mathrm{lax}}$.

\section{Experimental Program}
The program tests the functional form of Proposition~\ref{p:bound}, beginning in the best-instrumented domain and ending at the boundary the hypothesis is about. These are experiments to be run, not results: the paper reports no findings.

\paragraph{Phase 1, within-domain sanity.} Train the substrate on synthetic driving and navigation (CARLA, Habitat). Confirm that masked-latent prediction and short-horizon forecasting beat trivial baselines. No later phase runs until this gate passes.

\paragraph{Phase 2, near transfer.} Pretrain on driving voxels (nuScenes, Waymo, KITTI) and transfer to indoor manipulation and navigation (Open X-Embodiment). These plausibly admit a small discrepancy, so SSTH predicts strong representation reuse and few-shot transfer.

\paragraph{Phase 3, far transfer, the decisive test.} Pretrain on driving voxels and transfer to limit-order-book prediction on public market data under all three regimes. SSTH predicts that transfer strength tracks the measured discrepancy between driving and order-book dynamics, and that if the discrepancy is large the transfer is near zero. This phase is the decisive test of the hypothesis.

\paragraph{Phase 4, discrepancy calibration.} Estimate $\varepsilon_{\mathrm{lax}}$ and the Gromov-Wasserstein lower bound of Proposition~\ref{p:gw} for each domain pair, then plot transfer gain against discrepancy and against horizon. The bound would be supported if transfer gain decreases in discrepancy and decays in horizon at the rate set by the measured Lipschitz constant, and would be refuted otherwise.

\subsection{Metrics}
Representation transfer would be measured by linear-probe accuracy on $\mathcal{D}_2$ tasks using the frozen $\mathcal{D}_1$ core and by centered kernel alignment between native and transported latents. Sample efficiency is the performance gap between pretrained and scratch cores at labeled budgets of one, ten, twenty-five, and one hundred percent. Predictor transfer is the forecasting error of a $\mathcal{D}_1$ predictor on $\mathcal{D}_2$ with adapters only. Discrepancy is the estimated $\varepsilon_{\mathrm{lax}}$ and its Gromov-Wasserstein lower bound on held-out paired trajectories.

\subsection{Preregistered predictions}
These are hypotheses to be tested, not findings; no experiment has been run. (P1) We predict that Phase 2 will show positive few-shot transfer. (P2) Phase 3 transfer is expected to be bounded above by a decreasing function of the Phase 4 discrepancy. (P3) For the highest-discrepancy pair, representation reuse is expected to be statistically indistinguishable from scratch. (P4) For every pair, transfer error is expected to grow with horizon at a rate consistent with the measured Lipschitz constant in Proposition~\ref{p:bound}. All four would be reported regardless of outcome.

\section{What Would Falsify the Theory}
SSTH and the bound are wrong if any of the following holds: transfer gain is uncorrelated with measured discrepancy across pairs; a low-discrepancy pair such as driving to manipulation shows no transfer; a near-maximal-discrepancy pair shows strong transfer, which would attribute transfer to a mechanism other than shared structure; or horizon decay is independent of the measured Lipschitz constant, which would refute Proposition~\ref{p:bound} specifically while leaving the qualitative hypothesis open. Each negative outcome is informative and is reported as such.

\section{Limitations and Risks}
Estimating the discrepancy between very different domains is itself hard and noisy, which weakens the sharpness of the test. The Gromov-Wasserstein floor of Proposition~\ref{p:gw} holds cleanly only under the isometric normalization of $\phi_{\mathcal{Z}}$; a general map carries a bi-Lipschitz constant that must be estimated. Admissibility is restrictive, since many decision problems are not naturally grid-structured, and forcing a grid may destroy the structure transfer would rely on. The substrate inherits representation-collapse and distribution-shift risks from the world-model literature, and Assumption~\ref{a:lip} may hold only locally. Most importantly, this paper presents no empirical results. SSTH is a hypothesis with a test attached, not a finding, and the cross-domain claim, including any application to financial or risk systems, is unvalidated and must not be represented as a demonstrated capability.

\section{Conclusion}
The contribution of this work is not a new architecture, since the architecture recombines validated parts. It is a theory of transfer: transfer between structured-state decision domains is governed by a measurable structural discrepancy, certified from below by a Gromov-Wasserstein distance, controlling a predictive error that separates source-model error from structural mismatch, grows geometrically in horizon, and maps to decision regret through the Lipschitz value property of bisimulation metrics, all testable behind a fixed domain-invariant interface with preregistered conditions for failure. Whether the hypothesis holds is unknown. That it is now answerable is the contribution.

\subsection*{Acknowledgments}
The author thanks the broader research community whose published work forms the foundation of this proposal.


\begin{thebibliography}{99}
\setlength{\itemsep}{1pt}
\bibitem{ijepa} Assran, M., Duval, Q., Misra, I., Bojanowski, P., Vincent, P., Rabbat, M., LeCun, Y., \& Ballas, N. (2023). Self-supervised learning from images with a joint-embedding predictive architecture. \emph{CVPR}.
\bibitem{vjepa} Bardes, A., Garrido, Q., Ponce, J., Chen, X., Rabbat, M., LeCun, Y., Assran, M., \& Ballas, N. (2024). V-JEPA: Latent video prediction for visual representation learning. \emph{Meta AI Technical Report}.
\bibitem{vjepa2} Assran, M., Ballas, N., et al. (2025). V-JEPA 2: Self-supervised video models enable understanding, prediction and planning. \emph{arXiv:2506.09985}.
\bibitem{rt2} Brohan, A., et al. (2023). RT-2: Vision-language-action models transfer web knowledge to robotic control. \emph{CoRL}.
\bibitem{openvla} Kim, M. J., et al. (2024). OpenVLA: An open-source vision-language-action model. \emph{arXiv:2406.09246}.
\bibitem{pi0} Black, K., et al. (2024). $\pi_0$: A vision-language-action flow model for general robot control. \emph{arXiv:2410.24164}.
\bibitem{ha} Ha, D., \& Schmidhuber, J. (2018). World models. \emph{arXiv:1803.10122}.
\bibitem{dreamer} Hafner, D., Pasukonis, J., Ba, J., \& Lillicrap, T. (2023). Mastering diverse domains through world models. \emph{arXiv:2301.04104}.
\bibitem{dinowm} Zhou, G., Pan, H., LeCun, Y., \& Pinto, L. (2024). DINO-WM: World models on pre-trained visual features enable zero-shot planning. \emph{arXiv:2411.04983}.
\bibitem{pldm} Sobal, V., et al. (2025). PLDM: Pixel-space latent JEPA world models.
\bibitem{lecun} LeCun, Y. (2022). A path towards autonomous machine intelligence. \emph{OpenReview}.
\bibitem{byol} Grill, J.-B., et al. (2020). Bootstrap your own latent. \emph{NeurIPS}.
\bibitem{mae} He, K., Chen, X., Xie, S., Li, Y., Doll\'ar, P., \& Girshick, R. (2022). Masked autoencoders are scalable vision learners. \emph{CVPR}.
\bibitem{pointjepa} Saito, A., et al. (2025). Point-JEPA: A joint-embedding predictive architecture for self-supervised learning on point clouds. \emph{WACV}.
\bibitem{3djepa} Hu, N., Cheng, H., Xie, Y., Li, S., \& Zhu, J. (2024). 3D-JEPA: A joint-embedding predictive architecture for 3D self-supervised representation learning. \emph{arXiv:2409.15803}.
\bibitem{geomae} Tian, X., et al. (2023). GeoMAE: Masked geometric target prediction for self-supervised point-cloud pre-training. \emph{CVPR}.
\bibitem{lidarjepa} Zhu, H., \& Choromanska, A. (2026). Self-supervised JEPA-based world models for LiDAR occupancy completion and forecasting. \emph{arXiv:2602.12540}.
\bibitem{pointnet} Qi, C. R., Su, H., Mo, K., \& Guibas, L. J. (2017). PointNet: Deep learning on point sets for 3D classification and segmentation. \emph{CVPR}.
\bibitem{voxelnet} Zhou, Y., \& Tuzel, O. (2018). VoxelNet: End-to-end learning for point cloud based 3D object detection. \emph{CVPR}.
\bibitem{minkowski} Choy, C., Gwak, J., \& Savarese, S. (2019). 4D spatio-temporal ConvNets: Minkowski convolutional neural networks. \emph{CVPR}.
\bibitem{ferns04} Ferns, N., Panangaden, P., \& Precup, D. (2004). Metrics for finite Markov decision processes. \emph{UAI}.
\bibitem{ferns11} Ferns, N., Panangaden, P., \& Precup, D. (2011). Bisimulation metrics for continuous Markov decision processes. \emph{SIAM J. Computing}.
\bibitem{ravindran} Ravindran, B., \& Barto, A. G. (2003). SMDP homomorphisms: An algebraic approach to abstraction in semi-Markov decision processes. \emph{IJCAI}.
\bibitem{taylor} Taylor, J., Precup, D., \& Panangaden, P. (2009). Bounding performance loss in approximate MDP homomorphisms. \emph{NeurIPS}.
\bibitem{deepmdp} Gelada, C., Kumar, S., Buckman, J., Nachum, O., \& Bellemare, M. G. (2019). DeepMDP: Learning continuous latent space models for representation learning. \emph{ICML}.
\bibitem{dbc} Zhang, A., McAllister, R., Calandra, R., Gal, Y., \& Levine, S. (2021). Learning invariant representations for reinforcement learning without reconstruction. \emph{ICLR}.
\bibitem{contmdp} Rezaei-Shoshtari, S., Zhao, R., Panangaden, P., Meger, D., \& Precup, D. (2022). Continuous MDP homomorphisms and homomorphic policy gradient. \emph{NeurIPS}.
\bibitem{gbsm} Tao, Z., Xu, W., \& You, X. (2025). A generalized bisimulation metric of state similarity between Markov decision processes. \emph{arXiv:2509.18714}.
\bibitem{gw} M\'emoli, F. (2011). Gromov-Wasserstein distances and the metric approach to object matching. \emph{Foundations of Computational Mathematics}.
\bibitem{cpc} van den Oord, A., Li, Y., \& Vinyals, O. (2018). Representation learning with contrastive predictive coding. \emph{arXiv:1807.03748}.
\end{thebibliography}
\end{document}